\font\stixfrak=stix-mathfrak at 10pt
\newcommand\Func[2]{%
    \textbf{function} #1
    \algocf@group{#2}%
}
\newcommand\Forr[2]{%
    \textbf{for} #1 \textbf{do}%
    \algocf@group{#2}%
}
\newcommand\Blnk[2]{%
    \hspace{10pt}#1
    \algocf@group{#2}%
    \textbf{end}
}
\newcommand{\removelatexerror}{\let\@latex@error\@gobble}
\def\@endtheorem{\endtrivlist}
\newtheorem{theorem}{Theorem}
\newtheorem{definition}{Definition}
\newtheorem{proposition}{Proposition}
\newtheorem{remark}{Remark}
\newtheorem{problem}{Problem}
\date{\today}
\newcommand{\nn}{{\mathscr{N}\negthickspace\negthickspace\negthinspace\mathscr{N}}\negthinspace}
\newcommand{\ou}{%
  \mathrel{%
    \vcenter{\offinterlineskip
      \ialign{##\cr$<$\cr\noalign{\kern-1.5pt}$>$\cr}%
    }%
  }%
}%
\newcommand{\overbar}[1]{\mkern 3.0mu\overline{\mkern-2.5mu#1\mkern-2.0mu}\mkern 2.0mu}
\newcommand{\lblkbrbrack}{\negthinspace\text{{\stixfrak\char"36}}\normalfont}
\newcommand{\rblkbrbrack}{\text{{\stixfrak\char"37}}\normalfont}
\newcommand{\tllparams}[2]{\Xi_{\scriptscriptstyle #1,#2}}
\newcommand{\multitllparams}[3]{\Xi^{(#3)}_{\scriptscriptstyle #1,#2}}
\newcommand{\subarg}[1]{\lblkbrbrack #1 \rblkbrbrack}
\begin{document}

\title{
\LARGE{\bf Bounding the Complexity of Formally Verifying Neural Networks: \\A Geometric Approach
}
} %
\author{James Ferlez\textsuperscript{$*$} and Yasser Shoukry\textsuperscript{$*$}
\thanks{
\textsuperscript{$*$}Department of Electrical Engineering and Computer Science, University of California, Irvine
\texttt{\{jferlez,yshoukry\}@uci.edu}
} %
\thanks{This  work  was  partially  sponsored  by  the  NSF  awards \#CNS-2002405 and \#CNS-2013824.}
} %

\maketitle

\begin{abstract}
	In this paper, we consider the computational complexity of formally 
	verifying the behavior of Rectified Linear Unit (ReLU) Neural Networks 
	(NNs), where verification entails determining whether the NN satisfies 
	convex polytopic specifications. Specifically, we show that for two 
	different NN architectures -- shallow NNs and Two-Level Lattice (TLL) NNs 
	-- the verification problem with (convex) polytopic constraints is 
	\emph{polynomial} in the number of neurons in the NN to be verified, when 
	all other aspects of the verification problem held fixed. We achieve these 
	complexity results by exhibiting explicit (but similar) verification 
	algorithms for each type of architecture. Both algorithms efficiently
	translate the NN parameters into a partitioning of the NN's input space by 
	means of \emph{hyperplanes}; this has the effect of partitioning the 
	original verification problem into polynomially many sub-verification 
	problems derived from the geometry of the neurons. 
	We show that these sub-problems may be chosen so that the NN is purely 
	affine within each, and hence each sub-problem is solvable in polynomial 
	time by means of a Linear Program (LP). Thus, a polynomial-time algorithm 
	for the original verification problem can be obtained using known 
	algorithms for enumerating the regions in a hyperplane arrangement.
	Finally, we adapt our proposed algorithms to the verification of dynamical 
	systems, specifically when these NN architectures are used as 
	state-feedback controllers for LTI systems. We further evaluate the 
	viability of this approach numerically.
\end{abstract}


\section{Introduction} 
\label{sec:introduction}
Neural Networks (NNs) are increasingly
used as feedback controllers in safety-critical cyber-physical systems, so
algorithms that can \emph{verify} the safety of such controllers are of crucial 
importance. However, despite the importance of NN verification algorithms, 
relatively little attention has been paid to an analysis of their computational 
complexity. Such considerations are especially relevant in the verification of 
controllers, since a verifier may be invoked many times to verify a controller 
in closed loop.

On the one hand, it is known that the satisfiability of any 3-SAT formula can 
be encoded as a NN verification problem, but this result requires its variables 
to be in correspondence with the input dimensions to the network 
\cite{KatzReluplexEfficientSMT2017a}. This means the complexity of verifying a 
NN depends unfavorably on the dimension of its input space. On the other hand, 
this result doesn't address the relative difficulty of verifying a NN with a 
\emph{fixed} input dimension but an increasing number of neurons. The only 
results in this vein exhibit networks for which the number of affine regions 
grows exponentially in the number of neurons in the network -- see e.g. 
\cite{MontufarNumberLinearRegions2014}. However, these merely suggest that the 
verification problem is still ``hard'' in the number of neurons in the network 
(input and output dimensions are fixed). There are not, to our knowledge, any 
concrete complexity results that address this second question.

In this paper, we we prove two such concrete complexity results that explicitly 
describe the computational complexity of verifying a NN as a function of its 
size. In particular, we prove that the complexity of verifying either a shallow 
NN or a Two-Level Lattice NN \cite{FerlezAReNAssuredReLU2020} grows only 
\emph{polynomially} in the number of neurons in the network to be verified, all 
other aspects of the verification problem held fixed. These results appear in 
Section \ref{sec:framework} as Theorem \ref{thm:main_shallow_nn_thm} and 
Theorem \ref{thm:main_tll_nn_thm} for shallow NNs and TLL NNs, respectively. 
Our proofs for both of these complexity results are existential: that is we 
propose one concrete verification algorithm for each architecture. 
By their mere existence, the complexity results we prove herein demonstrate 
that the NN verification problem is not per se a ``hard'' problem as a function 
of the size of the NN to be verified.  Moreover, although our results show that 
the complexity of verifying a shallow or a TLL NN scales polynomially with its 
size, our complexity claims \emph{do} scale exponentially in the dimension of 
the input to the NN. Thus, our results do not contradict the known 
results in \cite{KatzReluplexEfficientSMT2017a}. One further observation is in 
order: while our results do speak directly to the complexity of the 
verification problem as a function of the number of neurons, they \emph{do not} 
address the complexity of the verification problem in terms of the 
\emph{expressivity} of a particular network size; see Section 
\ref{sec:framework}.

Moreover, the nature of our proposed algorithms means \emph{they have direct 
applicability to verifying such NNs when they are used as feedback 
controllers.} In particular, they verify a NN by dividing its input space into 
regions on which the NN is affine; in this context, verifying an input/output 
property requires one Linear Program (LP) on each such region, but such an LP 
can easily be extended to verify certain discrete-time dynamical properties for 
LTI systems. That is our algorithm can verify whether the \emph{next state} 
resulting from a state-feedback NN controller lies in a particular polytopic 
set (e.g. forward invariance of a (polytopic) set of states).

We conclude this paper with a set of experimental results that validate the 
claims we have made about our proposed TLL verifier. First, we show that our 
implementation does in fact scale polynomially. And second, we show that it can 
be adapted to verify the forward invariance of a polytopic set of states (on an 
LTI system with state feedback TLL controller).


\noindent \textbf{Related work:} 
Most of the work on NN verification has focused on obtaining practical 
algorithms rather than theoretical complexity results, although many have 
noticed empirically that there is a significant complexity associated with the 
input dimension; \cite{KatzReluplexEfficientSMT2017a} is a notable exception, 
since it also included an NP-completeness result based on the 3-SAT encoding 
mentioned above. Other examples of pragmatic NN verification approaches 
include: (i) SMT-based methods;
(ii) %
MILP-based solvers;
(iii) Reachability based methods; and
(iv) convex relaxations methods. A good survey of these methods an be found in 
\cite{LiuAlgorithmsVerifyingDeep2019}.
By contrast, a number of works have focused on the computational complexity of 
various other verification-related questions for NNs 
(\cite{KatzReluplexEfficientSMT2017a} is the exception in that it expressly 
considers the verification problem itself). Some NN-related complexity results 
include: the minimum adversarial disturbance to a NN is NP hard  
\cite{WengFastComputationCertified2018}; computing the Lipschitz constant of a 
NN is NP hard \cite{VirmauxLipschitzRegularityDeep2018}; reachability analysis 
is NP hard \cite{RuanReachabilityAnalysisDeep2018a}, 
\cite{TranStarBasedReachabilityAnalysis2019}.



\section{Preliminaries} 
\label{sec:preliminaries}

\subsection{Notation} 
\label{sub:notation}
We will denote the real numbers by $\mathbb{R}$. For an $(n \times m)$ matrix 
(or vector), $A$, we will use the notation $\llbracket A \rrbracket_{i,j}$ to 
denote the element in the $i^\text{th}$ row and $j^\text{th}$ column of $A$. 
Analogously, the notation $\llbracket A \rrbracket_{i,\cdot}$ will denote the 
$i^\text{th}$ row of $A$, and $\llbracket A \rrbracket_{\cdot, j}$ will denote 
the $j^\text{th}$ column of $A$; when $A$ is a vector instead of a matrix, both 
notations will return a scalar corresponding to the corresponding element in 
the vector. Let $\mathbf{0}_{n,m}$ be an $(n \times m)$ matrix of zeros. We 
will use bold parenthesis $\;\subarg{ \cdot }$ to delineate the arguments to a 
function that \emph{returns a function}. For example, given an $(m \times n)$ 
matrix, $W$, (possibly with $m=1$) and an $(m \times 1)$ dimensional vector, 
$b$, we define the linear function: $ \mathscr{L}^i \subarg{ W, b } : x \mapsto 
\llbracket W x + b \rrbracket_i$ (that is $\mathscr{L}^i \subarg{ W, b }$ is 
itself a function). We also use the functions $\mathtt{First}$ and 
$\mathtt{Last}$ to return the first and last elements of an ordered list (or by 
overloading, a vector in $\mathbb{R}^n$). The function $\mathtt{Concat}$ 
concatenates two ordered lists, or by overloading, concatenates two vectors in 
$\mathbb{R}^n$ and $\mathbb{R}^m$ along their (common) nontrivial dimension to 
get a third vector in $\mathbb{R}^{n+m}$. We will use an over-bar to indicate 
(topological) closure of a set: i.e. $\overbar{A}$ is the closure of $A$. 
Finally, $B(x;\delta)$ denotes an open Euclidean ball centered at $x$ with 
radius $\delta$.

\subsection{Neural Networks} 
\label{sub:neural_networks}
We will exclusively consider Rectified Linear Unit Neural Networks (ReLU NNs). 
A $K$-layer ReLU NN is specified by composing $K$ \emph{layer} functions. We 
allow two kinds of layers: linear and nonlinear. A \emph{nonlinear} layer with 
$\mathfrak{i}$ inputs and $\mathfrak{o}$ outputs is specified by a 
$(\mathfrak{o} \times \mathfrak{i} )$ real-valued matrix of \emph{weights}, 
$W$, and a $(\mathfrak{o} \times 1)$ real-valued matrix of \emph{biases}, $b$: 
\begin{equation}
	L_{\theta} : \mathbb{R}^{\mathfrak{i}} \rightarrow \mathbb{R}^{\mathfrak{o}},  \qquad
	    L_{\theta} :  z \mapsto \max\{ W z + b, 0 \}
\end{equation}
where the $\max$ function is taken element-wise, and $\theta \triangleq (W,b)$. 
A \emph{linear} layer is the same as a nonlinear layer, except it omits the 
nonlinearity $\max\{\cdot , 0\}$ in its layer function; a linear layer will be 
indicated with a superscript \emph{lin} e.g. $L^\text{lin}_{\theta}$

Thus, a $K$-layer ReLU NN function as above is specified by $K$ layer functions 
$\{L_{\theta^{(i)}} : i = 1, \dots, K\}$ whose input and output dimensions are 
\emph{composable}: that is they satisfy $\mathfrak{i}_{i} = \mathfrak{o}_{i-1}: 
i = 2, \dots, K$. \textbf{We adopt the convention that the final layer is 
always a linear layer}, but other layers may be linear or not.
To make the dependence on parameters explicit, we will index a ReLU function 
$\nn$ by a \emph{list of matrices} $\Theta \triangleq$ $( \theta^{|1},$ 
$\dots,$ $\theta^{|K} );$\footnote{That is $\Theta$ is not the concatenation of 
the $\theta^{(i)}$ into a single large matrix, so it preserves information 
about the sizes of the constituent $\theta^{(i)}$.} in this respect, we will 
often use $\nn_\Theta$ and $\Theta$ interchangeably when no confusion will 
result.

The number of layers and the \emph{dimensions} of the associated matrices 
$\theta^{|i} = (\; W^{|i}, b^{|i}\; )$ specifies the \emph{architecture} of the 
ReLU NN. Therefore, we will use:
\begin{equation}
	\text{Arch}(\Theta) \triangleq ( (n,\mathfrak{o}_{1}), (\mathfrak{i}_{2},\mathfrak{o}_{2}), \ldots, 
	(\mathfrak{i}_{K}, m))
\end{equation}
to denote the architecture of the ReLU NN $\nn_{\Theta}$.

\begin{definition}[Shallow NN]\label{def:shallow_nn}
	A \textbf{shallow NN} is two-layer NN whose first layer is nonlinear and 
	whose second is linear.
\end{definition}


\subsection{Special NN Operations} 
\label{sub:special_nn_operations}
\begin{definition}[Sequential (Functional) Composition]
\label{def:functional_composition}
	Let $\nn_{\Theta_{\scriptscriptstyle 1}}$ and 
	$\nn_{\Theta_{\scriptscriptstyle 2}}$ be two NNs where 
	$\mathtt{Last}(\text{Arch}(\Theta_1)) = (\mathfrak{i}, \mathfrak{c})$ and 
	$\mathtt{First}(\text{Arch}(\Theta_2)) =  (\mathfrak{c}, \mathfrak{o})$ for 
	some nonnegative integers $\mathfrak{i}$, $\mathfrak{o}$ and 
	$\mathfrak{c}$. Then the \textbf{sequential (or functional) composition} of 
	$\nn_{\Theta_{\scriptscriptstyle 1}}$ and $\nn_{\Theta_{\scriptscriptstyle 
	2}}$, i.e. $\nn_{\Theta_{\scriptscriptstyle 1}} \circ 
	\nn_{\Theta_{\scriptscriptstyle 2}}$, is a well defined NN, and can be 
	represented by the parameter list $\Theta_{1} \circ \Theta_{2} \triangleq 
	\mathtt{Concat}(\Theta_1, \Theta_2)$.
\end{definition}
\begin{definition}
	\label{def:parallel_composition}
	Let $\nn_{\Theta_{\scriptscriptstyle 1}}$ and 
	$\nn_{\Theta_{\scriptscriptstyle 2}}$ be two $K$-layer NNs with parameter 
	lists:
	\begin{equation}
		\Theta_i = ((W^{\scriptscriptstyle |1}_i, b^{\scriptscriptstyle |1}_i), \dots, (W^{\scriptscriptstyle |K}_i, b^{\scriptscriptstyle |K}_i)), \quad i = 1,2.
	\end{equation}
	Then the \textbf{parallel composition} of $\nn_{\Theta_{\scriptscriptstyle 
	1}}$ and $\nn_{\Theta_{\scriptscriptstyle 2}}$ is a NN given by the 
	parameter list
	\begin{equation}
		\Theta_{1} \parallel \Theta_{2} \triangleq \big(\negthinspace
			\left(
				\negthinspace
				\left[
					\begin{smallmatrix}
						W^{\scriptscriptstyle |1}_1 \\
						W^{\scriptscriptstyle |1}_2
					\end{smallmatrix}
				\right],
				\left[
					\begin{smallmatrix}
						b^{\scriptscriptstyle |1}_1 \\
						b^{\scriptscriptstyle |1}_2
					\end{smallmatrix}
				\right]
				\negthinspace
			\right),
			{\scriptstyle \dots},
			\left(
				\negthinspace
				\left[
					\begin{smallmatrix}
						W^{\scriptscriptstyle |K}_1 \\
						W^{\scriptscriptstyle |K}_2
					\end{smallmatrix}
				\right],
				\left[
					\begin{smallmatrix}
						b^{\scriptscriptstyle |K}_1 \\
						b^{\scriptscriptstyle |K}_2
					\end{smallmatrix}
				\right]
				\negthinspace
			\right)
		\negthinspace\big).
	\end{equation}
	That is $\Theta_{1} \negthickspace \parallel \negthickspace \Theta_{2}$ 
	accepts an input of the same size as (both) $\Theta_1$ and $\Theta_2$, but 
	has as many outputs as $\Theta_1$ and $\Theta_2$ combined.
\end{definition}

\begin{definition}[$n$-element $\min$/$\max$ NNs]
	\label{def:n-element_minmax_NN}
	An $n$\textbf{-element $\min$ network} is denoted by the parameter list 
	$\Theta_{\min_n}$. $\nn\subarg{\Theta_{\min_n}}: \mathbb{R}^n \rightarrow 
	\mathbb{R}$ such that $\nn\subarg{\Theta_{\min_n}}(x)$ is the the minimum 
	from among the components of $x$ (i.e. minimum according to the usual order 
	relation $<$ on $\mathbb{R}$). An $n$\textbf{-element $\max$ network} is 
	denoted by $\Theta_{\max_n}$, and functions analogously. These networks are 
	described in \cite{FerlezAReNAssuredReLU2020}.
\end{definition}

\subsection{Two-Level-Lattice (TLL) Neural Networks} 
\label{sub:two_layer_lattice_neural_networks}
In this paper, we will be especially concerned with ReLU NNs that have the 
Two-Level Lattice (TLL) architecture, as introduced with the AReN algorithm in 
\cite{FerlezAReNAssuredReLU2020}. We describe the TLL architecture in two 
separate subsections, one for scalar output TLL NNs, and one for multi-output 
TLL NNs. 
\subsubsection{Scalar TLL NNs} 
\label{ssub:scalar_tll_nns}
From \cite{FerlezAReNAssuredReLU2020}, a scalar-output TLL NN can be described 
as follows.
\begin{definition}[Scalar TLL NN {\cite[Theorem 2]{FerlezAReNAssuredReLU2020}}]
\label{def:scalar_tll}
A NN that maps $\mathbb{R}^n \rightarrow \mathbb{R}$ is said to be \textbf{TLL 
NN of size} $(N,M)$ if the size of its parameter list $\Xi_{\scriptscriptstyle 
N,M}$ can be characterized entirely by integers $N$ and $M$ as follows.
\begin{equation}
	\Xi_{N,M} \negthinspace \triangleq  \negthinspace
		\Theta_{\max_M} \negthinspace\negthinspace
	\circ \negthinspace
		\big(
			(\negthinspace\Theta_{\min_N} \negthinspace \circ \Theta_{S_1}\negthinspace) \negthinspace
			\parallel \negthinspace {\scriptstyle \dots} \negthinspace \parallel \negthinspace
			(\negthinspace\Theta_{\min_N} \negthinspace \circ \Theta_{S_M}\negthinspace)
		\big) \negthinspace
	\circ 
		\Theta_{\ell}
\end{equation}
where

\begin{itemize}
	\item $\Theta_\ell \triangleq ((W_\ell, b_\ell))$;

	\item  each $\Theta_{S_j}$ has the form $\Theta_{S_j} = \big( S_j, 
\mathbf{0}_{M,1} \big)$; and

	\item $S_j = \left[ \begin{smallmatrix} {\llbracket I_N 
		\rrbracket_{\iota_1, 
		\cdot}}\negthickspace\negthickspace\negthickspace^{^{\scriptscriptstyle\text{T}}} 
		& \; \dots \; & {\llbracket I_N \rrbracket_{\iota_N, 
		\cdot}}\negthickspace\negthickspace\negthickspace^{^{\scriptscriptstyle\text{T}}} 
		\end{smallmatrix} \right]^\text{T}$ for some sequence $\iota_k \in \{1, 
		\dots, N\}$ ($I_N$ is the $(N \times N)$ identity matrix). 
\end{itemize}


The linear functions implemented by the mapping $x \mapsto \llbracket W_\ell 
\rrbracket_{i, \cdot} \cdot x + \llbracket b_\ell \rrbracket_{i, \cdot}$ for $i 
= 1, \dots, N$ will be referred to as the \textbf{local linear functions} of 
$\Xi_{N,M}$; we assume for simplicity that these linear functions are unique. 
The matrices $\{ S_j | j = 1, \dots, M\}$ will be referred to as the 
\textbf{selector matrices} of $\Xi_{N,M}$. Each set $s_j \triangleq \{ k \in 
\{1, \dots, N\} | \exists \iota \in \{1, \dots, N\}. \llbracket S_j 
\rrbracket_{\iota,k} = 1 \}$ is said to be the selector set of $S_j$.
\end{definition}

\subsubsection{Multi-output TLL NNs} 
\label{ssub:multi_output_tll_nns}

We will define a multi-output TLL NN with range space $\mathbb{R}^m$ using $m$ 
equally sized scalar TLL NNs. That is we denote such a network by 
$\Xi^{(m)}_{N,M}$, with each output-components denoted by $\Xi^{i}_{N,M}$, $i = 
1, \dots, m$.



\subsection{Hyperplanes and Hyperplane Arrangements} 
\label{sub:hyperplanes_and_hyperplane_arrangements}
Here  we review notation for hyperplanes and hyperplane arrangements; 
\cite{StanleyIntroductionHyperplaneArrangements} is the main reference for this 
section.
\begin{definition}[Hyperplanes and Half-spaces]
	Let $\ell : \mathbb{R}^n \rightarrow \mathbb{R}$ be an affine map. Then 
	define:
	\begin{equation}
		H^{q}_{\ell} \triangleq 
		\begin{cases}
			\{x | \ell(x) < 0 \} & q = -1 \\
			\{x | \ell(x) > 0 \} & q = +1 \\
			\{x | \ell(x) = 0 \} & q = 0.
		\end{cases}
	\end{equation}
	We say that $H^{0}_\ell$ is the \textbf{hyperplane defined by} $\ell$ 
	\textbf{in dimension }$n$, and $H^{-1}_\ell$ and $H^{+1}_\ell$ are the 
	\textbf{negative and positive half-spaces defined by} $\ell$, respectively.
\end{definition}



\begin{definition}[Hyperplane Arrangement]
	Let $\mathcal{L}$ be a set of affine functions where each $\ell \in 
	\mathcal{L} : \mathbb{R}^n \rightarrow \mathbb{R}$. Then $\{ H^{0}_{\ell} | 
	\ell \in \mathcal{L} \}$ is an \textbf{arrangement of hyperplanes in 
	dimension} $n$.
\end{definition}

\begin{definition}[Region of a Hyperplane Arrangement]
\label{def:hyperplane_region}
	Let $\mathcal{H}$ be an arrangement of $N$ hyperplanes in dimension $n$ 
	defined by a set of affine functions, $\mathcal{L}$. Then a non-empty open 
	subset $R \subseteq \mathbb{R}^n$ is said to be a ($n$\textbf{-dimensional) 
	region of} $\mathcal{H}$ if there is an indexing function $\mathfrak{s} : 
	\mathcal{L} \rightarrow \{-1, 0 ,+1\}$ such that
	$R = \bigcap_{\ell \in \mathcal{L}} H^{\mathfrak{s}(\ell)}_\ell$ and 
	$B(x,\delta) \subset R$ for some $x$ and $\delta$.
	The set of all regions of arrangement $\mathcal{H}$ will be denoted 
	$\mathcal{R}_\mathcal{H}$.
\end{definition}




\begin{theorem}[\cite{StanleyIntroductionHyperplaneArrangements}]
\label{thm:arrangement_regions_bound}
	Let $\mathcal{H}$ be an arrangement of $N$ hyperplanes in dimension $n$. 
	Then $|\mathcal{R}_\mathcal{H}|$ is at most $\sum_{k=0}^n {N \choose k}$.
\end{theorem}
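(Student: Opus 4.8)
The plan is to prove the bound by induction on the number of hyperplanes $N$, reducing the dimension by one at each inductive step. Let $r(N,n)$ denote the maximum number of regions achievable by any arrangement of $N$ hyperplanes in dimension $n$; I will show $r(N,n) \le \sum_{k=0}^n \binom{N}{k}$, which suffices since $\mathcal{H}$ is an arbitrary such arrangement. The base cases are straightforward: an arrangement of $N=0$ hyperplanes partitions $\mathbb{R}^n$ into the single region $\mathbb{R}^n$, matching $\sum_{k=0}^n \binom{0}{k} = 1$; and in dimension $n=0$ the ambient space is a single point, giving one region and matching $\binom{N}{0}=1$.

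The heart of the argument is a deletion/restriction recurrence. Given an arrangement $\mathcal{H} = \{H^0_{\ell_1}, \dots, H^0_{\ell_N}\}$ with $N,n \geq 1$, I would single out the last hyperplane $H^0_{\ell_N}$ and compare the regions of $\mathcal{H}$ to those of the smaller arrangement $\mathcal{H}' = \{H^0_{\ell_1}, \dots, H^0_{\ell_{N-1}}\}$. The key geometric observation is that adding $H^0_{\ell_N}$ back to $\mathcal{H}'$ increases the region count only by splitting those regions of $\mathcal{H}'$ whose interiors $H^0_{\ell_N}$ actually meets, with every other region persisting unchanged. Moreover, each such split region $R$ contributes exactly one open piece $H^0_{\ell_N} \cap R$ lying on $H^0_{\ell_N}$, and these pieces are precisely the regions of the induced arrangement obtained by intersecting $H^0_{\ell_1}, \dots, H^0_{\ell_{N-1}}$ with the $(n-1)$-dimensional affine space $H^0_{\ell_N}$. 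Consequently the number of regions gained equals the number of regions of an arrangement of at most $N-1$ hyperplanes in dimension $n-1$, yielding
\begin{equation}
	r(N,n) \le r(N-1,n) + r(N-1,n-1).
\end{equation}

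To close the induction I would verify that the claimed bound satisfies this same recurrence. Applying Pascal's rule $\binom{N}{k} = \binom{N-1}{k} + \binom{N-1}{k-1}$ and reindexing the shifted sum (using $\binom{N-1}{-1}=0$) gives
\begin{equation}
	\sum_{k=0}^n \binom{N}{k} = \sum_{k=0}^n \binom{N-1}{k} + \sum_{k=0}^{n-1} \binom{N-1}{k},
\end{equation}
which is exactly the induction hypothesis applied to the two terms on the right-hand side of the recurrence. The desired inequality $r(N,n) \le \sum_{k=0}^n \binom{N}{k}$ then follows immediately.

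I expect the main obstacle to be making the ``counting the split regions'' step fully rigorous rather than merely intuitive. Specifically, I must carefully establish the correspondence between (i) the regions of $\mathcal{H}'$ whose interiors intersect $H^0_{\ell_N}$ and (ii) the $(n-1)$-dimensional regions cut out on $H^0_{\ell_N}$ by the remaining hyperplanes, and confirm that each such region of $\mathcal{H}'$ is split into exactly two regions of $\mathcal{H}$. This is cleanest to argue in terms of the indexing functions $\mathfrak{s} : \mathcal{L} \rightarrow \{-1,0,+1\}$ from the definition of a region, tracking how the sign $\mathfrak{s}(\ell_N)$ ranges over $\{-1,+1\}$ on a split region versus staying fixed on a persisting one. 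I would also need to dispatch degeneracies, such as hyperplanes that coincide with or run parallel to $H^0_{\ell_N}$, so that the induced arrangement may contain strictly fewer than $N-1$ distinct hyperplanes; this only decreases the count and hence is consistent with the inequality.
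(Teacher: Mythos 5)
Your proof is correct, but there is nothing in the paper to compare it against: the paper states Theorem \ref{thm:arrangement_regions_bound} with no proof at all, treating it as a classical fact about hyperplane arrangements (it is Buck's bound, a consequence of Zaslavsky's theorem, available in the cited Stanley lecture notes, which the paper names as the primary reference for that section). What you have reconstructed is precisely the standard textbook argument: the deletion/restriction recurrence $r(N,n) \le r(N-1,n) + r(N-1,n-1)$, obtained by observing that the regions gained upon inserting the $N$-th hyperplane are in bijection with the regions of the induced arrangement inside that hyperplane, closed out by Pascal's rule. The details you flag as needing care do go through: since regions are open convex sets, a region of the smaller arrangement that meets $H^0_{\ell_N}$ splits into exactly two nonempty pieces, the map $R \mapsto R \cap H^0_{\ell_N}$ sends split regions injectively into regions of the induced arrangement (connectedness of an induced region forces it to lie in a single region of the smaller arrangement, so this is in fact a bijection), and degeneracies such as parallel or coincident hyperplanes only shrink the induced arrangement, which is harmless because both $r(\cdot,n-1)$ and the binomial bound are monotone in the number of hyperplanes. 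So your proposal supplies a correct, self-contained, elementary proof of a result the paper simply imports; the only caveat is that for the paper's purposes the citation suffices, whereas your argument would be the right thing to include if the paper were to be made self-contained.
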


\begin{remark}
	Note that for a fixed dimension, $n$, the bound $\sum_{k=0}^n \negthinspace 
	{N \choose k}$ grows like $O(N^n\negthinspace/n!)$, i.e. sub-exponentially 
	in $N$.
\end{remark}

\section{Main Results} 
\label{sec:framework}





\subsection{NN Verification Problem} 
\label{sub:nn_verification_problem}

We take as a starting point the following verification problem for NNs.

\begin{problem}
\label{prob:generic_nn_verification}
	Let $\nn_\Theta: \mathbb{R}^n \rightarrow \mathbb{R}^m$ be a NN with at 
	least two layers. Furthermore, assume that there are two convex, bounded, 
	full-dimensional polytopes $P_x \subset \mathbb{R}^n$ and $P_y \subset 
	\mathbb{R}^m$ with H-representations\footnote{An H-representation of a 
	polytope is (possibly non-minimal) representation of that polytope as an 
	intersection of half-spaces.} given as follows:
	\begin{itemize}
		\item $ P_x \triangleq \cap_{i=1}^{\mathsf{N}_x} 
			\overbar{H^{-1}_{\ell_{x,i}}} \subset \mathbb{R}^n$ where 
			$\ell_{x,i} : \mathbb{R}^n \rightarrow \mathbb{R}$ is an affine map 
			for each $i = 1, \dots, \mathsf{N}_x$; and

		\item $P_y \triangleq \cap_{i=1}^{\mathsf{N}_y} 
			\overbar{H^{-1}_{\ell_{x,i}}} \subset \mathbb{R}^m $ where 
			$\ell_{y,i} : \mathbb{R}^m \rightarrow \mathbb{R}$ is an affine map 
			for each $i = 1, \dots, \mathsf{N}_y$.
	\end{itemize}
	Then the verification problem is to decide whether the following formula is 
	true:
	\begin{equation}
	\label{eq:verification_formula}
		\forall x \in P_x \subset \mathbb{R}^n . \big( \nn_{\Theta}\negthinspace(x) \in P_y \subset \mathbb{R}^m \big).
	\end{equation}
	If \eqref{eq:verification_formula} is true, the problem is \textbf{SAT}; 
	otherwise,it is \textbf{UNSAT}.
\end{problem}
We proceed with this formulation of Problem \ref{prob:generic_nn_verification} 
for simplicity, and to emphasize the verification complexity in terms of NN 
parameters. \textbf{Even so, our proposed algorithm evaluates NNs on regions 
where they are affine}, and on such regions, verifying the input/output 
property in \eqref{eq:verification_formula} is essentially the same as 
verifying a control-relevant property such as
\begin{equation}\label{eq:dynamical_constraint}
	\mathfrak{L}( x, \nn_\Theta(x)) \in P_y
\end{equation}
(for a linear function $\mathfrak{L}$). Examples of 
\eqref{eq:dynamical_constraint} appear in forward invariance verification of 
LTI systems (see Section \ref{sub:lti_system_forward_invariance_verification}) 
and verifying autonomous robots controlled by NNs 
\cite{SunFormalVerificationNeural2019}.


\subsection{Main Theorems} 
\label{sub:complexity_of_nn_verification}

The main results of this paper consist of showing that Problem 
\ref{prob:generic_nn_verification} can be solved in polynomial time complexity 
in the number of neurons for two classes of networks. In particular, we state 
the following two theorems.

\begin{theorem}\label{thm:main_shallow_nn_thm}
	Let $\Theta = ((n,\mathfrak{n}), (\mathfrak{n},m))$ be a shallow network 
	with $\mathfrak{n}$ neurons. Now consider an instance of Problem 
	\ref{prob:generic_nn_verification} for this network: i.e. fixed dimensions 
	$n$ and $m$, and fixed constraint sets $P_x$ and $P_y$. Then there is an 
	algorithm that solves this instance of Problem 
	\ref{prob:generic_nn_verification} in polynomial time complexity in 
	$\mathfrak{n}$. This algorithm has a worst case runtime of
	\begin{equation}
		\mathsf{N}_y \cdot O( m \cdot n^2 \cdot \mathfrak{n}^{n+2} / n! ) \cdot \mathtt{Cplxty}(\mathtt{LP}(\mathfrak{n}+ \mathsf{N}_x,n))
	\end{equation}
	where $\mathtt{Cplxty}(\mathtt{LP}(N,n))$ is the complexity of solving a 
	linear program in dimension $n$ with $N$ constraints.  
\end{theorem}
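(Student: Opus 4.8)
The plan is to exploit the fact that a shallow ReLU NN is a continuous piecewise-affine function whose pieces are read directly off the first-layer weights. Writing $\theta^{|1} = (W,b)$ for the nonlinear layer and $\theta^{|2} = (W',b')$ for the linear layer, I would define the $\mathfrak{n}$ affine maps $\ell_i : x \mapsto \llbracket Wx + b\rrbracket_i$ and let $\mathcal{H}$ be the arrangement of the hyperplanes $\{H^0_{\ell_i}\}_{i=1}^{\mathfrak{n}}$. On the interior of any region $R \in \mathcal{R}_\mathcal{H}$ the sign of each pre-activation $\ell_i$ is constant, so the ReLU activation pattern is fixed; consequently $\nn_\Theta$ agrees on $R$ with a single affine map $x \mapsto A_R x + c_R$, where $A_R = W' D_R W$ and $D_R$ is the diagonal $0/1$ matrix selecting the neurons active on $R$. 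Recovering $(A_R,c_R)$ from the activation pattern is a pair of matrix products, which, amortised over all regions, accounts for the additive $\pmb{N}^n \cdot m \cdot \mathfrak{n} \cdot m$ term.

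Next I would reduce Problem \ref{prob:generic_nn_verification} to a family of linear programs, one per (region, output-constraint) pair. Since the closed regions $\{\overbar{R} : R \in \mathcal{R}_\mathcal{H}\}$ cover $\mathbb{R}^n$ and hence $P_x$, and since $\nn_\Theta$ is continuous while $P_y = \cap_{j=1}^{\mathsf{N}_y}\overbar{H^{-1}_{\ell_{y,j}}}$ is closed, the formula \eqref{eq:verification_formula} holds if and only if for every region $R$ meeting $P_x$ and every $j \in \{1,\dots,\mathsf{N}_y\}$ one has $\sup_{x \in \overbar{R}\cap P_x}\ell_{y,j}(A_R x + c_R) \le 0$. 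Each such supremum is the optimum of a linear program in dimension $n$ whose constraints are the (at most $\mathfrak{n}$) half-spaces cutting out $\overbar{R}$ together with the $\mathsf{N}_x$ half-spaces of $P_x$, for a total of $\mathfrak{n}+\mathsf{N}_x = O(\pmb{N})$ constraints; this is an $\mathtt{LP}(\pmb{N},n)$ instance, and returning UNSAT as soon as any optimum is positive (and SAT otherwise) decides the problem, since a positive optimum exhibits a point of $P_x$ whose image violates $\ell_{y,j}$. By Theorem \ref{thm:arrangement_regions_bound} there are at most $\sum_{k=0}^n\binom{\mathfrak{n}}{k} = O(\pmb{N}^n)$ regions, so the total number of LPs is $O(\mathsf{N}_y \cdot \pmb{N}^n)$, which is exactly the second summand of the claimed bound.

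The hard part is the remaining step: producing the regions of $\mathcal{H}$ --- and with each its activation pattern and an H-representation of $\overbar{R}$ --- without paying the $3^{\mathfrak{n}}$ cost of blindly sweeping over all sign vectors. Here I would invoke the poset structure on $\mathcal{R}_\mathcal{H}$ of \cite{EdelmanPartialOrderRegions1984} together with the paper's own enumeration routine, whose key property is that the poset successors of a given region are computable in polynomial time, each successor test being itself a small feasibility LP on an adjacent sign vector. A traversal of this poset then visits all $O(\pmb{N}^n)$ regions while spending only polynomially many $\mathtt{LP}(\pmb{N},n)$ solves per region; accounting for the bookkeeping of the traversal --- searching the $O(\pmb{N}^n)$ visited regions and testing up to $O(\mathfrak{n})$ candidate neighbours, each at the cost of a dimension-$n$ LP --- yields the $n\cdot \pmb{N}^{2n+2}\cdot\log\pmb{N} \cdot \mathtt{Cplxty}(\mathtt{LP}(\pmb{N},n))$ enumeration term. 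Summing the enumeration cost, the per-subproblem LP cost, and the affine-map bookkeeping gives the stated worst-case runtime, which is polynomial in $\mathfrak{n}$ precisely because $n$, $\mathsf{N}_x$, and $\mathsf{N}_y$ are held fixed. Establishing the polynomiality of the enumeration --- i.e. that successors are genuinely polynomial-time computable and that the traversal charges each region only $O(1)$ amortised visits --- is the crux on which the entire complexity claim rests.
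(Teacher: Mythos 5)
Your proposal is correct and follows essentially the same route as the paper: the same first-layer switching-hyperplane arrangement, the same per-region recovery of the active affine map via the product $W^{|2}D_R W^{|1}$, the same reduction to one LP per (region, output-constraint) pair, and the same appeal to the poset-based enumeration of Lemma \ref{lem:traverse_regions_lemma} as the crux of polynomiality. The only immaterial difference is bookkeeping: the paper folds the $\mathsf{N}_x$ hyperplanes of $P_x$ into the traversed arrangement so that $P_x$ becomes a union of closed regions, whereas you traverse only the $\mathfrak{n}$ neuron hyperplanes and append the $P_x$ half-space constraints to each LP; both choices yield the stated complexity bound.
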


\begin{theorem}\label{thm:main_tll_nn_thm}
	Let $\multitllparams{N}{M}{m}$ be a multi-output TLL network. Now consider 
	an instance of Problem \ref{prob:generic_nn_verification} for this network: 
	i.e. fixed dimensions $n$ and $m$, and fixed constraint sets $P_x$ and 
	$P_y$. Then there is an algorithm that solves this instance of Problem 
	\ref{prob:generic_nn_verification} in polynomial time complexity in $N$ and 
	$M$. This algorithm has a worst case runtime of
	\begin{equation}
		\mathsf{N}_y \cdot O(m^{n+2} \cdot n \cdot M \cdot N^{2n+3}/n! ) \cdot \mathtt{Cplxty}(\mathtt{LP}(m \cdot N^2 + \mathsf{N}_x,n)) \notag
	\end{equation}
	where $\mathtt{Cplxty}(\mathtt{LP}(N,n\negthinspace))$ is the complexity of 
	solving a linear program in dimension $n$ with $N$ constraints. The 
	algorithm is thus polynomial in the number of neurons in 
	$\multitllparams{N}{M}{m}$, since the number of neurons depends 
	polynomially on $N$ and $M$.
\end{theorem}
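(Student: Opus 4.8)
The plan is to follow the common template described in the introduction: translate the parameters of $\multitllparams{N}{M}{m}$ into a hyperplane arrangement whose regions refine the affine regions of the network, decide each region with a short family of LPs, and enumerate the regions in polynomial time. First I would expose the arrangement. By Definitions \ref{def:scalar_tll} and \ref{def:multi-output_tll}, each component $\Xi^{\iota}_{N,M}$ computes $x \mapsto \max_{j} \min_{k \in s_j} \ell^{\iota}_k(x)$, where $\ell^{\iota}_k : x \mapsto \llbracket W^{\iota}_\ell x + b^{\iota}_\ell \rrbracket_k$ is the $k^{\text{th}}$ local linear function of component $\iota$ and $s_j$ is the selector set of $S_j$. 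This lattice form is affine away from the loci where two local linear functions of the same component agree, so I would take the arrangement
\[
\mathcal{H} \triangleq \{ H^0_{\ell^{\iota}_i - \ell^{\iota}_k} : \iota = 1,\dots,m,\ 1 \le i < k \le N \},
\]
which has $m\binom{N}{2} = O(mN^2)$ hyperplanes. On the closure of any region $R \in \mathcal{R}_{\mathcal{H}}$ the relative order of each component's local linear functions is fixed, so every $\min$/$\max$ selection is constant and the entire multi-output network restricts to a single affine map $x \mapsto A_R x + c_R$.

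Next I would reduce each region to linear programs. Adjoining to $\mathcal{H}$ the $\mathsf{N}_x$ bounding hyperplanes of $P_x$ yields an arrangement of $\pmb{N} = mN^2 + \mathsf{N}_x$ hyperplanes, and it suffices to treat the regions lying inside $P_x$. On the closure of such a region the network equals $A_R x + c_R$ and the feasible set is a polytope cut out by at most $\pmb{N}$ half-spaces. For each output facet $\ell_{y,i}$ of $P_y$ the scalar map $x \mapsto \ell_{y,i}(A_R x + c_R)$ is affine, so the condition $\sup_{x} \ell_{y,i}(A_R x + c_R) \le 0$ over that polytope is decided by one $\mathtt{LP}(\pmb{N},n)$; the region is \textbf{SAT} exactly when all $\mathsf{N}_y$ of these LPs return nonpositive optima. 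Since the closures of the regions cover $P_x$, the instance of Problem \ref{prob:generic_nn_verification} is \textbf{SAT} iff every region is, which gives correctness.

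By Theorem \ref{thm:arrangement_regions_bound} there are only $O(\pmb{N}^n)$ regions. Recovering $A_R, c_R$ from the sign pattern of a region amounts to reading off, per component, the active local linear function by sorting the $N$ local functions and evaluating the $M$ selector minima, costing $O(N^2 + M\cdot N\log N)$ per component and hence $O(m(N^2 + M N \log N))$ per region; multiplied over all regions this produces the term $\pmb{N}^n \cdot m \cdot (N^2 + M\cdot N\log N)$. Combined with the $\mathsf{N}_y$ LP solves per region, the region work accounts for the $\mathsf{N}_y \cdot \pmb{N}^n$ factor multiplying $\mathtt{Cplxty}(\mathtt{LP}(\pmb{N},n))$ and for the second summand of the stated bound.

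The hard part will be enumerating $\mathcal{R}_{\mathcal{H}}$ in polynomial time, which is exactly the novel ingredient promised in the introduction. Here I would invoke the poset on regions of \cite{EdelmanPartialOrderRegions1984}: fix a base region and order the regions by inclusion of their separating-hyperplane sets. The key lemma to establish is that the poset successors of a region are polynomially computable — each candidate is the sign vector obtained by flipping a single hyperplane, and both nonemptiness and the cover relation can be tested by one $\mathtt{LP}(\pmb{N},n)$, with $O(\pmb{N})$ candidates per region. The delicate points are (i) showing this local rule visits every region under a connected traversal, and (ii) bounding the bookkeeping that compares each newly generated sign vector against the previously discovered regions so that no region is reprocessed too often; accounting for these over the $O(\pmb{N}^n)$ regions is what yields the enumeration cost $O(n\cdot \pmb{N}^{2n+2}\cdot \log\pmb{N})\cdot \mathtt{Cplxty}(\mathtt{LP}(\pmb{N},n))$, the first summand. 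Summing the enumeration, LP, and affine-form costs gives the claimed runtime, and since the neuron count of $\multitllparams{N}{M}{m}$ is polynomial in $N$ and $M$ (Definition \ref{def:multi-output_tll}), the algorithm is polynomial in the network size for fixed $n$ and $m$.
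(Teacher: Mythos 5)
Your proposal reproduces the paper's own proof essentially step for step: the same arrangement of pairwise differences of local linear functions per component augmented with the facets of $P_x$ (Propositions \ref{prop:linear_regions_tll} and \ref{prop:tll_linfn_access} and the proposition following them), the same one-LP-per-output-constraint check on each region, the same $O(m(N^2 + M\cdot N\log N))$ sort-and-select recovery of the active affine map, and the same Edelman-poset enumeration that the paper isolates as Lemma \ref{lem:traverse_regions_lemma} (your single-flip feasibility LPs are equivalent to the paper's minimal H-representation computation, which is itself performed by $O(\pmb{N})$ LP calls per region). The approach and the cost accounting are correct and essentially identical to the paper's.
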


\noindent In particular, Theorem \ref{thm:main_shallow_nn_thm} and Theorem 
\ref{thm:main_tll_nn_thm} explicitly indicate that the difficulty in verifying 
their respective classes of NNs grows only polynomially in the complexity of 
the network, all other parameters of Problem \ref{prob:generic_nn_verification} 
held fixed. Note also that the polynomial complexity of these algorithms 
depends on the existence of polynomial-time solvers for linear programs, but 
such solvers are well known to exist (see e.g. 
\cite{NemirovskiInteriorpointMethodsOptimization2008}).

It is important to note that Theorem \ref{thm:main_shallow_nn_thm} and Theorem 
\ref{thm:main_tll_nn_thm} both identify algorithms that are exponential in the 
input dimension to the network. Thus, these results do not contradict the 3-SAT 
embedding of \cite{KatzReluplexEfficientSMT2017a}. Indeed, given that a TLL NN 
can represent any CPWA function \cite{FerlezAReNAssuredReLU2020} -- including 
the 3-SAT gadgets used in \cite{MontufarNumberLinearRegions2014} -- it follows 
directly that the satisfiability of any 3-SAT formula can be encoded as an 
instance of Problem \ref{prob:generic_nn_verification} for a TLL NN. Since the 
input dimensions of both NNs are the same, the conclusion of 
\cite{KatzReluplexEfficientSMT2017a} is preserved, even though the TLL may use 
a different number of neurons.

Finally, it is essential to note that the results in Theorem 
\ref{thm:main_shallow_nn_thm} and Theorem \ref{thm:main_tll_nn_thm} connect the 
difficulty of verifying a TLL NN (resp. shallow NN) to the \emph{size} of the 
network not the \emph{expressivity} of the network. The semantics of the TLL NN 
in particular make this point especially salient, since each distinct affine 
function represented in the output of a TLL NN can be mapped directly to 
parameters of the TLL NN itself (see Proposition \ref{prop:linear_regions_tll} 
in Section \ref{sec:polynomially_verifiable_nn_forms}). In particular, consider 
the deep NNs exhibited in \cite[Corrollary 6]{MontufarNumberLinearRegions2014}: 
this parameterized collection of networks expresses a number of unique affine 
functions that grows exponentially in the number of neurons in the network 
(i.e. as a function of the number of layers in the network). Consequently, the 
size of a TLL required to implement one such  network would likewise grow 
exponentially in the number of neurons deployed in the original network. Thus, 
although a TLL NN may superficially seem ``easy'' to verify because of Theorem 
\ref{thm:main_tll_nn_thm}, the efficiency in verifying a TLL NN form could mask 
the fact that a particular TLL NN implementation is less parameter efficient 
than some other representation (in terms of neurons required). Ultimately, this 
trade-off will not necessarily be universal, though, since TLL NNs also have 
mechanisms for parametric \emph{efficiency}: for example, a particular local 
linear function need only be implemented once in a TLL NN, no matter how many 
disjoint regions on which it is activated (as in the case of implementing 
interpolated zero-order-hold functions, such as in 
\cite{FerlezTwoLevelLatticeNeural2020}).

\subsection{Proof Sketch of Main Theorems} 
\label{sub:proof_sketch_of_main_result}

\subsubsection{Core Theorem: Polynomial-time Enumeration of Hyperplane Regions} 
\label{ssub:core_lemma_polynomial_enumeration_of_hyperplane_regions}

The algorithms that witness the claims in Theorem \ref{thm:main_shallow_nn_thm} 
and \ref{thm:main_tll_nn_thm} have the same broad structure:
\begin{enumerate}[left=\parindent,label={\itshape Step \arabic*:}]
	\item For the architecture in question, choose (in polynomial time) a 
		hyperplane arrangement with the following three properties:
		\begin{enumerate}[label={\itshape (\alph*)}]
			\item The number of hyperplanes is polynomial in the number of 
				network neurons;

			\item $P_x$ is contained in the union of the closure of regions 
				from this arrangement; and

			\item Problem \ref{prob:generic_nn_verification} can be solved 
				in polynomial time on the closure of any region, $R$, in this 
				arrangement -- i.e. Problem \ref{prob:generic_nn_verification} 
				with $P_x$ replaced by $\overbar{R}$ can be solved in 
				polynomial time.
		\end{enumerate}

	\item Iterate over all of the regions in this arrangement, and for each 
		region, solve Problem \ref{prob:generic_nn_verification} with $P_x$ 
		replaced by $\overbar{R}\cap P_x$.
\end{enumerate}
The details of \emph{Step 1} vary depending on the architecture of the network 
being verified (Theorem \ref{thm:main_shallow_nn_thm} and Theorem 
\ref{thm:main_tll_nn_thm}). However, no matter the details of \emph{Step 1}, 
this proof structure depends on a polynomial time algorithm to traverse the 
regions in a hyperplane arrangement. But it is known that there exist 
polynomial algorithms to perform such enumerations. The following result from 
\cite{AvisReverseSearchEnumeration1996} is one example; an ``optimal'', if not 
practical, option is \cite{EdelsbrunnerConstructingArrangementsLines1986}.

\begin{theorem}[\cite{AvisReverseSearchEnumeration1996} Theorem 3.3]
	\label{lem:traverse_regions_lemma}
	Let $\mathcal{L} = \{\ell_1, \dots, \ell_N\}$ be a set of affine functions, 
	$\ell_i : \mathbb{R}^n \rightarrow \mathbb{R}$, that can be accessed in 
	$O(1)$ time, and let $\mathcal{HL} = \{H^0_\ell | \ell \in \mathcal{L}\}$ 
	be the associated hyperplane arrangement.

	Then there is an algorithm to traverse all of the regions in $\mathcal{HL}$ 
	that has runtime
	\begin{equation}
		O(n\cdot N^{n+1}/{n!}) \cdot \mathtt{Cplxty}(\mathtt{LP}(N,n))
	\end{equation}
	where $\mathtt{Cplxty}(\mathtt{LP}(N,n))$ is the complexity of solving a 
	linear program in dimension $n$ with $N$ constraints.
\end{theorem}

\noindent Note that there is more to Theorem \ref{lem:traverse_regions_lemma} 
than just the sub-exponential bound on the number of regions in a hyperplane 
arrangement (see Theorem \ref{thm:arrangement_regions_bound} in Section 
\ref{sec:preliminaries}). Indeed, although there are only $O(N^n/n!)$ regions 
in an arrangement of $N$ hyperplanes in dimension $n$, it must be inferred 
which of the $2^N$ possible activations correspond to \emph{valid} regions. 
That this is possible in polynomial time is the main contribution of Theorem 
\ref{lem:traverse_regions_lemma}, and thus facilitates the results in this 
paper.


\subsubsection{Theorem \ref{thm:main_shallow_nn_thm} and Theorem \ref{thm:main_tll_nn_thm}} 
\label{ssub:theorem_thm:main_shallow_nn_thm_and_theorem_thm:main_tll_nn_thm}

Given Theorem \ref{lem:traverse_regions_lemma}, the proofs of Theorem 
\ref{thm:main_shallow_nn_thm} and Theorem \ref{thm:main_tll_nn_thm} depend on 
finding a suitable hyperplane arrangement, as described in \emph{Step 1}.

In both cases, we note that the easiest closed convex polytope on which to 
solve Problem \ref{prob:generic_nn_verification} is one on which the underlying 
NN is affine. Indeed, suppose for the moment that $\nn\subarg{\Theta}$ is 
affine on the entire constraint set $P_x$ with $\nn\subarg{\Theta} = \ell_0$ on 
this domain. Under this assumption, solving the verification problem for a 
single output constraint, $\ell_{y,i}$, entails solving the following linear 
program:
\begin{align}
	y_i &= \max (\ell_{y,i}\circ\ell_0)(x) \notag \\
		&~ \text{ s.t. }\ell_{x,i^\prime}(x) \leq 0 \text{ for } i^\prime = 1, \dots, \mathsf{N}_x.
\end{align}
Of course if $y_i > 0$, then Problem \ref{prob:generic_nn_verification} is 
UNSAT under these assumptions; otherwise it is SAT for the constraint 
$\ell_{y,i}$ and the next output constraint needs to be considered. Given the 
known (polynomial) efficiency of solving linear programs, it thus makes sense 
to select a hyperplane arrangement for \emph{Step 1} with the property that the 
NN is affine on each region of the arrangement. Although this is a difficult 
problem for a general NN, the particular structure of shallow NNs and TLL NNs 
allow such a selection to be accomplished efficiently.






To this end, we make the following definition.

\begin{definition}[Switching Affine Function/Hyperplane]
	Let $\nn\subarg{\Theta} : \mathbb{R}^n \rightarrow \mathbb{R}^m$ be a NN. A 
	set of affine functions $\mathcal{S} = \{\ell_1, \dots, \ell_N\}$ with 
	$\ell_\iota : \mathbb{R}^n \rightarrow \mathbb{R}$ is said to be a set of 
	\textbf{switching affine functions for} $\nn\subarg{\Theta}$ if 
	$\nn\subarg{\Theta}$ is affine on every region of the hyperplane 
	arrangement $\mathcal{HS} = \{H^0_{\ell} | \ell \in \mathcal{S} \}$. 
	$\mathcal{HS}$ is then said to the be an \textbf{arrangement of switching 
	hyperplanes} of $\nn\subarg{\Theta}$.
\end{definition}

\noindent For both shallow NNs and TLL NNs, we will show that a set of 
switching hyperplanes is immediately evident (i.e. in polynomial complexity) 
from the parameters of those architectures directly; this satisfies \emph{Step 
1(b)}. However it also further implies that this choice of switching 
hyperplanes has a \emph{number} of hyperplanes that is polynomial in the number 
of neurons in either network; this satisfies \emph{Step 1(a)}.

\section{Polynomial-time Algorithm to Verify Shallow NNs} 
\label{sec:proof_of_theorem_thm:main_shallow_nn_thm}
This section consists of several propositions that address the various aspects 
of \emph{Step 1}, as described in Subsection 
\ref{ssub:core_lemma_polynomial_enumeration_of_hyperplane_regions}. Theorem 
\ref{thm:main_shallow_nn_thm} is a direct consequence of these propositions.

\begin{proposition}\label{prop:linear_regions_shallow}
	Let $\Theta = ((W^{|1},b^{|1}), (W^{|2},b^{|2}))$ be a shallow NN with 
	$\text{Arch}(\Theta) = ((n, \mathfrak{n}),(\mathfrak{n},m))$. Then the set 
	of affine functions
	\begin{equation}
		\mathcal{S}(\Theta) \triangleq \{ \mathscr{L}^i\subarg{W^{|1},b^{|1}}, i = 1, \dots, \mathfrak{n}  \}
	\end{equation}
	is a set of switching affine functions for $\Theta$, and 
	$\mathcal{HS}(\Theta) = \{H^0_\ell | \ell \in \mathcal{S}(\Theta)\}$ is a 
	set of switching hyperplanes.
\end{proposition}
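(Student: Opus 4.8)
The plan is to unpack the definition of a shallow NN and show directly that fixing the sign pattern of the first-layer pre-activations renders each ReLU unit affine. Writing out the composition in Definition \ref{def:shallow_nn}, a shallow NN computes
$$\nn\subarg{\Theta}(x) = W^{|2}\max\{W^{|1}x + b^{|1}, 0\} + b^{|2},$$
where the $\max$ is taken element-wise. The crucial observation is that the $i$-th component of the pre-activation vector $W^{|1}x + b^{|1}$ is exactly the affine map $\ell_i \triangleq \mathscr{L}^i\subarg{W^{|1}, b^{|1}}$ appearing in $\mathcal{S}(\Theta)$, so the hyperplane $H^0_{\ell_i}$ is precisely the locus where the $i$-th ReLU unit switches between its active and inactive branches.

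First I would fix an arbitrary region $R = \bigcap_{\ell \in \mathcal{S}(\Theta)} H^{\mathfrak{s}(\ell)}_\ell$ of the arrangement $\mathcal{HS}(\Theta)$, specified by its indexing function $\mathfrak{s}$. By the definition of a region, the sign $\mathfrak{s}(\ell_i)$ is constant on $R$ for each $i$, so a short case analysis shows that the $i$-th ReLU output $\max\{\ell_i(x), 0\}$ is affine on $R$ regardless of which of the three values $\mathfrak{s}(\ell_i)$ takes. Indeed, if $\mathfrak{s}(\ell_i) = +1$ then $\ell_i(x) > 0$ throughout $R$, so $\max\{\ell_i(x), 0\} = \ell_i(x)$; while if $\mathfrak{s}(\ell_i) \in \{0, -1\}$ then $\ell_i(x) \leq 0$ throughout $R$, so $\max\{\ell_i(x), 0\} = 0$. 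In either case the output is an affine function of $x$ on $R$.

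With each component of the layer-one output established to be affine on $R$, the vector $\max\{W^{|1}x + b^{|1}, 0\}$ is (componentwise) an affine map of $x$ on $R$; composing it with the final linear layer $x \mapsto W^{|2}x + b^{|2}$, which is itself affine and hence preserves affineness, shows that $\nn\subarg{\Theta}$ is affine on $R$. Since $R$ was an arbitrary region of $\mathcal{HS}(\Theta)$, this establishes that $\mathcal{S}(\Theta)$ is a set of switching affine functions and that $\mathcal{HS}(\Theta)$ is the associated arrangement of switching hyperplanes, as claimed.

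I do not expect a genuine obstacle here: the content is essentially the standard fact that a ReLU network's only nonlinearities occur at its neuron hyperplanes, which is exactly what $\mathcal{S}(\Theta)$ records. The one point requiring minor care is handling the degenerate sign value $\mathfrak{s}(\ell_i) = 0$ uniformly with the other two cases, so that the conclusion covers lower-dimensional regions (those lying inside some switching hyperplane) in addition to the full-dimensional ones; this is immediate once one notes that a zero pre-activation still yields the affine value $\max\{0, 0\} = 0$.
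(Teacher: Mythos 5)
Your proof is correct and follows essentially the same route as the paper's: a region of $\mathcal{HS}(\Theta)$ fixes the activation status of every first-layer neuron, and a fixed activation pattern makes the network affine there, since the second layer is linear. The only difference is presentational --- you spell out the case analysis (including the degenerate $\mathfrak{s}(\ell_i)=0$ case for lower-dimensional regions) that the paper compresses into two sentences.
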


\begin{proof}
	A region in the arrangement $\mathcal{HS}(\Theta)$ exactly assigns to each 
	neuron a status of strictly active or strictly inactive.
	But forcing a particular activation on \emph{each} of the neurons forces 
	the shallow NN to operate on an affine region.
\end{proof}

\begin{proposition}\label{prop:shallow_linfn_access}
	Let $\Theta = ((W^{|1},b^{|1}), (W^{|2},b^{|2}))$ be a shallow NN with 
	$\text{Arch}(\Theta) = ((n, \mathfrak{n}),(\mathfrak{n},m))$, and let 
	$\mathcal{HS}(\Theta)$ be as in Proposition 
	\ref{prop:linear_regions_shallow}. Then the complexity of determining the 
	active linear function on a region of $\mathcal{HS}(\Theta)$ is at most
	\begin{equation}
		O(m\cdot \mathfrak{n} \cdot n).
	\end{equation}
\end{proposition}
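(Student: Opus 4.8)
The plan is to exploit the fact, established in the proof of Proposition \ref{prop:linear_regions_shallow}, that each region $R$ of $\mathcal{HS}(\Theta)$ fixes the activation status (strictly active or strictly inactive) of every one of the $\mathfrak{n}$ first-layer neurons. Crucially, this status is read off directly from the indexing function $\mathfrak{s}$ that specifies $R$ (equivalently, from its binary encoding): neuron $i$ is active on $R$ precisely when $\mathfrak{s}(\mathscr{L}^i\subarg{W^{|1},b^{|1}}) = +1$. First I would record this activation pattern as a diagonal $(\mathfrak{n}\times\mathfrak{n})$ matrix $D$ with $\llbracket D \rrbracket_{i,i} = 1$ on active neurons and $\llbracket D \rrbracket_{i,i}=0$ otherwise; reading off the $\mathfrak{n}$ entries costs $O(\mathfrak{n})$.

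Next I would write the affine function that $\nn\subarg{\Theta}$ represents on $R$ in closed form. Since an active neuron passes its preactivation and an inactive neuron outputs $0$, on $R$ the network evaluates to
\begin{equation}
	\nn\subarg{\Theta}(x) = W^{|2} D (W^{|1}x + b^{|1}) + b^{|2} = (W^{|2} D W^{|1})\, x + (W^{|2}D b^{|1} + b^{|2}).
\end{equation}
Thus the active affine function on $R$ is specified by the coefficient matrix $A \triangleq W^{|2}DW^{|1} \in \mathbb{R}^{m\times n}$ and the bias vector $c \triangleq W^{|2}Db^{|1}+b^{|2} \in \mathbb{R}^m$.

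The remaining step is simply to count the arithmetic required to form $A$ and $c$. Applying $D$ amounts to zeroing the columns of $W^{|2}$ indexed by the inactive neurons, which costs $O(m\cdot\mathfrak{n})$; the product of the resulting $(m\times\mathfrak{n})$ matrix with the $(\mathfrak{n}\times n)$ matrix $W^{|1}$ costs $O(m\cdot\mathfrak{n}\cdot n)$ by ordinary matrix multiplication; and forming $c$ requires a further $O(m\cdot\mathfrak{n})$ operations. The matrix product dominates, yielding the claimed bound $O(m\cdot\mathfrak{n}\cdot n)$.

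I do not anticipate a genuine obstacle here: the conceptual work --- that a region pins down a single affine piece of the network --- is inherited from Proposition \ref{prop:linear_regions_shallow}, and what remains is a routine operation count. The only point requiring (minor) care is associating the two matrix multiplications so that neither intermediate product exceeds $O(m\cdot\mathfrak{n}\cdot n)$, but either natural ordering of the product $W^{|2}DW^{|1}$ meets this bound.
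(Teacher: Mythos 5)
Your proposal is correct and takes essentially the same route as the paper: the paper's proof simply observes that the cost is dominated by the matrix multiplication $W^{|2}\cdot W^{|1}$, which with $\text{Arch}(\Theta) = ((n,\mathfrak{n}),(\mathfrak{n},m))$ gives $O(m\cdot\mathfrak{n}\cdot n)$. Your write-up merely makes explicit what the paper leaves implicit --- the diagonal activation mask $D$, the closed form $(W^{|2}DW^{|1})x + (W^{|2}Db^{|1}+b^{|2})$, and the subordinate $O(m\cdot\mathfrak{n})$ costs --- so it is a faithful, more detailed version of the same argument.
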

\begin{proof}
	This runtime is clearly dominated by the cost of doing the matrix 
	multiplication $W^{|2} \cdot W^{|1}$. Given that $\text{Arch}(\Theta) = 
	((n, \mathfrak{n}),(\mathfrak{n},m))$, this operation has the claimed 
	runtime.
\end{proof}



\noindent Theorem \ref{thm:main_shallow_nn_thm} now follows.

\begin{proof}
	(Theorem \ref{thm:main_shallow_nn_thm}.) First note that 
	$|\mathcal{HS}(\Theta)| = \mathfrak{n}$. Thus, by Proposition 
	\ref{prop:linear_regions_shallow}, the closure of regions from 
	$\mathcal{HS}(\Theta)$ covers $\mathbb{R}^n$, and $\Theta$ is an affine 
	function on each such closure.

	Thus, an algorithm to solve Problem \ref{prob:generic_nn_verification} for 
	$\Theta$ can be obtained by enumerating the regions in 
	$\mathcal{HS}(\Theta)$ using the algorithm from Theorem 
	\ref{lem:traverse_regions_lemma}, and for each such region, $\mathfrak{s}$ 
	(see Definition \ref{def:hyperplane_region}), solving one linear program:
	\begin{align}
		y_i &= \max (\ell_{y,i}\circ\ell_0)(x) \notag \\
		&~ \text{ s.t. }\ell_{x,i^\prime}(x) \leq 0 \text{ for } i^\prime = 1, \dots, \mathsf{N}_x \notag \\
		&~ \text{ and }\mathfrak{s}(\ell)\cdot\ell(x) \leq 0 \text{ for } \ell \in \mathcal{S}(\Theta).
	\end{align}
	for each output polytope constraint $i \in 1, \dots, \mathsf{N}_y$. The 
	claimed runtime then follows directly by incorporating the cost of 
	computing the active affine function on each such region (Proposition 
	\ref{prop:shallow_linfn_access}) and bounding the size of each enumeration 
	LP (Theorem \ref{lem:traverse_regions_lemma}) by the size of the  LP above, 
	which has at most $\mathfrak{n} \negthinspace + \negthinspace \mathsf{N}_x$ 
	constraints in dimension $n$.
\end{proof}




\section{Polynomial-time Algorithm to Verify TLL NN} 
\label{sec:polynomially_verifiable_nn_forms}
This section consists of several propositions that address the various aspects 
of \emph{Step 1}, as described in Subsection 
\ref{ssub:core_lemma_polynomial_enumeration_of_hyperplane_regions}. Theorem 
\ref{thm:main_tll_nn_thm} is a direct consequence of these propositions.

\begin{proposition}\label{prop:linear_regions_tll}
	Let $\multitllparams{N}{M}{m}$ be a TLL NN. Then define
	\begin{equation}
		\mathcal{S}\negthinspace(\tllparams{N}{M}^\kappa\negthinspace) \negthinspace \triangleq \negthinspace \{ \negthinspace\mathscr{L}^i\subarg{ W_\ell^\kappa\negthinspace,\negthinspace b_\ell^\kappa } \negthickspace - \negthickspace \mathscr{L}^j \subarg{ W_\ell^\kappa\negthinspace,\negthinspace b_\ell^\kappa } |
		i < j \negthinspace \in \negthinspace \{\negthinspace1, .., \negthinspace N\negthinspace\} \negthinspace \}
	\end{equation}
	and $\mathcal{H\negthinspace S}(\tllparams{N}{M}^\kappa) \negthickspace 
	\triangleq \negthickspace \{H^0_\ell | \ell \in 
	\mathcal{S}(\tllparams{N}{M}^\kappa) \}$. Furthermore, define 
	$\mathcal{S}(\multitllparams{N}{M}{m}) \negthickspace \triangleq 
	\negthickspace \cup_{\kappa = 1}^m \mathcal{S}(\tllparams{N}{M}^\kappa)$ 
	and $\mathcal{H\negthinspace S}(\multitllparams{N}{M}{m}) \negthickspace 
	\triangleq \negthickspace \cup_{\kappa =1}^{m} \mathcal{H\negthinspace 
	S}(\tllparams{N}{M}^\kappa).$

	Then $\mathcal{S}(\multitllparams{N}{M}{m})$ is a set of switching affine 
	functions  for $\multitllparams{N}{M}{m}$, and the $\kappa^\text{th}$ 
	component of $\multitllparams{N}{M}{m}$ is an affine function on each 
	region of $\mathcal{HS}(\multitllparams{N}{M}{m})$ and is exactly equal to 
	$\mathscr{L}^i \subarg{ W^\kappa_\ell, b^\kappa_\ell }$ for some $i$.
\end{proposition}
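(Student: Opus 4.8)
The plan is to exploit the explicit max-of-mins functional form of a scalar TLL and to observe that the proposed hyperplanes are exactly the loci where the selections underlying these $\min$ and $\max$ operations can change. First I would recall from Definition \ref{def:scalar_tll} that the $\kappa^\text{th}$ component computes $\tllparams{N}{M}^\kappa(x) = \max_{j=1,\dots,M} \min_{k \in s_j^\kappa} \ell_k^\kappa(x)$, where $\ell_k^\kappa \triangleq \mathscr{L}^k\subarg{W_\ell^\kappa, b_\ell^\kappa}$ is the $k^\text{th}$ local linear function and $s_j^\kappa$ is the selector set of $S_j^\kappa$. The hyperplanes in $\mathcal{HS}(\tllparams{N}{M}^\kappa)$ are precisely $H^0_{\ell_i^\kappa - \ell_j^\kappa}$ for $i < j$, i.e. the loci where two local linear functions of the $\kappa^\text{th}$ component coincide; since these functions are assumed distinct, each such difference is a nonzero affine map and hence defines a genuine hyperplane.

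Next I would analyze the situation on a single (full-dimensional) region $R$ of $\mathcal{HS}(\tllparams{N}{M}^\kappa)$. By the definition of an $n$-dimensional region, the indexing function assigns a sign in $\{-1,+1\}$ (never $0$) to every hyperplane, so each difference $\ell_i^\kappa - \ell_j^\kappa$ has a fixed, nonzero sign throughout $R$. Equivalently, picking any $x_0 \in R$, the real numbers $\ell_1^\kappa(x_0), \dots, \ell_N^\kappa(x_0)$ are totally ordered, and because the sign of every pairwise difference is constant on $R$, this total order is identical at every point of $R$. Consequently, for each selector set $s_j^\kappa$ the inner minimum $\min_{k \in s_j^\kappa} \ell_k^\kappa$ is achieved by a single fixed index throughout $R$ --- so it equals one affine function $\ell_{k_j}^\kappa$ on all of $R$ --- and likewise the outer maximum over $j$ selects one fixed index, giving $\tllparams{N}{M}^\kappa = \ell_{k^*}^\kappa$ on $R$ for some $k^*$. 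This establishes that $\mathcal{S}(\tllparams{N}{M}^\kappa)$ is a set of switching affine functions for the component and that the active function is exactly one of its local linear functions.

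Finally I would lift this to the multi-output network. Since $\mathcal{HS}(\multitllparams{N}{M}{m}) = \cup_{\kappa=1}^m \mathcal{HS}(\tllparams{N}{M}^\kappa)$ is obtained by adjoining hyperplanes to each component arrangement, every region of the combined arrangement is contained in a region of each $\mathcal{HS}(\tllparams{N}{M}^\kappa)$ (adding hyperplanes only refines the arrangement, as already noted in the shallow-NN section). By the previous paragraph, every component is therefore affine --- indeed equal to some $\mathscr{L}^i\subarg{W_\ell^\kappa,b_\ell^\kappa}$ --- on each region of $\mathcal{HS}(\multitllparams{N}{M}{m})$; since $\multitllparams{N}{M}{m}$ is the parallel composition of its components (Definition \ref{def:multi-output_tll}), stacking these affine pieces shows the whole network is affine on each such region, which is exactly the claim.

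I expect the main obstacle to be the careful verification that fixing the signs of the pairwise differences genuinely resolves both the inner $\min$ and the outer $\max$ to a single affine function uniformly over $R$: in particular, arguing that the constant-sign data is transitively consistent (so that it really yields a total order rather than merely isolated pairwise comparisons, which follows from evaluating at a common point $x_0 \in R$) and that the resulting $\min$/$\max$ selections hold as an \emph{identity of affine functions} on all of $R$, not merely pointwise. The remaining steps --- the functional form and the refinement argument for the union of arrangements --- are routine.
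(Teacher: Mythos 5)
Your proposal is correct and takes essentially the same route as the paper's own proof: both reduce the multi-output claim to each scalar component by noting that regions of the union arrangement refine the component arrangements, and both then use the fact that on a full-dimensional region every pairwise difference of local linear functions has a fixed nonzero sign---so the region is a ``unique order region''---to resolve the inner $\min$ over each selector set and the outer $\max$ to a single local linear function valid identically on the whole region. Your extra care in checking that the constant pairwise signs assemble into a single total order (via a common evaluation point $x_0$) is a detail the paper asserts without elaboration, but it is the same argument.
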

\begin{proof}
	Let $R$ be a region in $\mathcal{HS} (\multitllparams{N}{M}{m})$. It is 
	obvious that such a region is contained in exactly one region from each of 
	the component-wise arrangements $\mathcal{HS} (\tllparams{N}{M}^\kappa)$, 
	so it suffices to show that each component TLL is linear on the regions of 
	its corresponding arrangement.

	Thus, let $R_\kappa$ be a region in $\mathcal{HS} (\tllparams{N}{M}^\kappa 
	)$. We claim that $\tllparams{N}{M}^\kappa$ is linear on $R_\kappa$. To see 
	this, note by definition of a region, there is an indexing function 
	$\mathfrak{s} : \mathcal{S} (\tllparams{N}{M}^\kappa) \rightarrow 
	\{-1,+1\}$ such that $R_k = \bigcap_{\ell \in \mathcal{S} 
	(\tllparams{N}{M}^\kappa )} H^{\mathfrak{s}(\ell)}_\ell$. Thus, $R_k$ is a 
	unique order region by construction: each such half-space identically 
	orders the outputs of two linear functions, and since $R_k$ is 
	$n$-dimensional it is contained in just such a half space for each and 
	every possible pair.
	Applying the unique ordering property of $R_\kappa$ to the definition of 
	the TLL NN implies that there exists an index $\iota \in \{1, \dots, N\}$ 
	such that $\nn \subarg{ \tllparams{N}{M}^\kappa } (x) = \llbracket 
	W^\kappa_\ell x + b^\kappa_\ell \rrbracket_{\iota}$ for all $x \in 
	R_\kappa$.
\end{proof}

\begin{proposition}\label{prop:tll_linfn_access}
	Let $\multitllparams{N}{M}{m}$ be a multi-output TLL NN,
	and let $\mathcal{HS}\negthinspace(\multitllparams{N}{M}{m})$ be as in 
	Proposition \ref{prop:linear_regions_tll}. Then for any region $R$ of 
	$\mathcal{HS}\negthinspace(\multitllparams{N}{M}{m})$ the affine function 
	of $\multitllparams{N}{M}{m}$ that is active on $R$ can be found by a 
	polynomial algorithm of runtime
	\begin{equation}
		O(m \cdot M \cdot (N+1)).
	\end{equation}
\end{proposition}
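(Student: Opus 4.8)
The plan is to build directly on Proposition~\ref{prop:linear_regions_tll}, which already establishes the two facts that make this an essentially combinatorial problem. First, on any region $R$ of $\mathcal{HS}(\multitllparams{N}{M}{m})$ each component $\tllparams{N}{M}^\kappa$ is affine and equal to exactly one local linear function $\mathscr{L}^\iota\subarg{W^\kappa_\ell,b^\kappa_\ell}$; determining ``the active affine function'' therefore amounts to outputting, for each $\kappa=1,\dots,m$, the single index $\iota$. Second, the proof of Proposition~\ref{prop:linear_regions_tll} shows that this index is forced purely by the strict total order that $R$ induces on the $N$ local linear functions, and that order is precisely the sign pattern $\mathfrak{s}$ that defines $R$ on the switching hyperplanes $H^0_{\mathscr{L}^i - \mathscr{L}^j}$. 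The key consequence, which I would emphasize at the outset, is that no affine function is ever evaluated at a point and no LP is solved: every quantity we need is read off the binary encoding of $R$.

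Concretely, I would iterate over the $m$ components. For a fixed $\kappa$, the first step extracts the ordering of the $N$ local linear functions from the $\binom{N}{2}$ signs $\mathfrak{s}(\mathscr{L}^i - \mathscr{L}^j)$ recorded in $R$'s encoding; since the full arrangement $\mathcal{HS}(\multitllparams{N}{M}{m})$ comprises $\Theta(m N^2)$ switching hyperplanes, decoding these pairwise comparisons and forming the induced ranking costs $O(N^2)$ per component, hence $O(m N^2)$ overall, which is the first term. The second step uses this ranking to resolve the lattice structure: by the argument in Proposition~\ref{prop:linear_regions_tll}, the output of $\nn\subarg{\Theta_{\min_N}}$ on the $j$-th selector set $s_j$ is the local linear function of least rank in $s_j$, and the output of $\nn\subarg{\Theta_{\max_M}}$ is the resulting min-term of greatest rank. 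Each of the $M$ min-selections and the single max-selection is a selection over at most $N$ (respectively $M$) ranked values, and bounding each such selection by a sort of size $\le N$ gives $O((M+1)\,N\log N)$ per component, i.e. $O\big(m\,(M+1)\,N\log N\big)$ in total. Summing the two terms yields the claimed $O(m N^2 + m (M+1) N \log N)$.

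The only genuinely substantive step — the rest is bookkeeping — is justifying that the signs defining $R$ really do yield one consistent, transitive total order and that the $\min$ and $\max$ selections depend on nothing more than that order. Here I would lean entirely on the fact that $R$ is an $n$-dimensional region: by Proposition~\ref{prop:linear_regions_tll} it lies strictly inside $H^{\mathfrak{s}(\ell)}_\ell$ for \emph{every} pairwise-difference hyperplane $\ell$, so every pair of local linear functions is strictly ordered, and transitivity is automatic because this is the order obtained by evaluating the affine functions at any single interior point of $R$. Given a consistent total order, finding the minimum over a selector set and then the maximum over the min-terms are purely order-theoretic operations on precomputed ranks, which is exactly what licenses the sorting-based accounting above and keeps the entire procedure free of any geometric computation beyond reading $R$. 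The main risk to watch is ensuring that the traversal of the $M$ selector sets and the mapping of their outputs into the final $\max$ stays within $O((M+1)N\log N)$, which the per-term sort bound guarantees.
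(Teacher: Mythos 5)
Your proposal is correct and follows essentially the same route as the paper's own proof: both extract the strict total order on the local linear functions of each component from the region's sign pattern (at cost $O(m \cdot N^2)$), then resolve each of the $M$ $\min$ terms and the final $\max$ purely order-theoretically in $O((M+1)\cdot N \log N)$ per component. Your added justification of transitivity via evaluation at an interior point is a welcome explicit detail that the paper leaves implicit.
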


\begin{proof}
	From the proof of Proposition \ref{prop:linear_regions_tll} we know that a 
	region in $\mathcal{HS}(\multitllparams{N}{M}{m})$ is a unique order region 
	for the linear functions $(W^\kappa_\ell,b^\kappa_\ell)$ of each component. 
	In other words, the indexing function of the region $R$ specifies a strict 
	ordering of each pair of linear functions from each component of 
	$\multitllparams{N}{M}{m}$. Thus, the region $R$, using its indexing 
	function $\mathfrak{s}_R$, \emph{pairwise-orders} the component-wise linear 
	functions on $R$.

	These pairwise comparisons can be used to identify the active affine 
	function on each $\min$ group, $\Theta_{\min_N}$ (of each output component) 
	by means of successive comparison in a bubble-sort-type way. Thus, 
	resolving the active function on each $\min$ group requires $N$ 
	comparisons, each of which is a direct look up in the region indexing 
	function, and hence $O(1)$. Moreover, the same argument applies to the 
	$\max$ operation for each component, only resolving the active affine 
	function there requires $M$ comparisons instead. Since there are $m \cdot 
	M$ $\min$ groups in total, and there are $m$ $\max$ groups in total, 
	resolving the active affine function runs in $O(m\cdot M \cdot N + m \cdot 
	M) = O(m \cdot M \cdot (N+1))$ as claimed.
\end{proof}



\noindent Theorem \ref{thm:main_tll_nn_thm} now follows from these propositions.

\begin{proof}
	(Theorem \ref{thm:main_tll_nn_thm}.) This proof follows exactly the same 
	structure as the proof of Theorem \ref{thm:main_shallow_nn_thm}. The 
	salient differences are that $|\mathcal{S}(\multitllparams{N}{M}{m})| = m 
	\cdot N\cdot(N-1)/2 = O(m \cdot N^2)$ (Proposition 
	\ref{prop:linear_regions_tll}), and the cost of obtaining the active affine 
	function on each region thereof is now specified as $O(m\cdot M \cdot 
	(N+1))$ (Proposition \ref{prop:tll_linfn_access}). The claimed runtime for 
	TLL networks follows mutatis mutandis from Theorem 
	\ref{lem:traverse_regions_lemma}.
\end{proof}

\section{Numerical Results} 
\label{sec:numerical_results}

\begin{figure*}
    \centering
	\begin{subfigure}[t]{0.49\textwidth}
		\hspace*{15pt}
		\raisebox{40pt}{\emph{(a)}~}\hspace{5pt}\includegraphics[width=0.8\linewidth,trim={0px 0cm 0px 0px},clip]{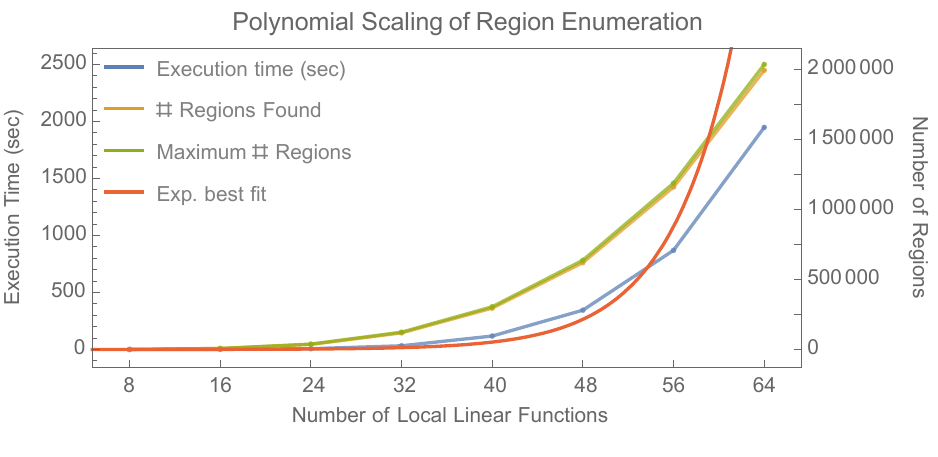}
		\vspace{-10pt}
	\label{fig:2-off}
	\end{subfigure}
	\begin{subfigure}[t]{0.49\textwidth}
		\hspace*{10pt}
		\raisebox{40pt}{\emph{(b)}~}\hspace{5pt}\includegraphics[width=0.7\linewidth,clip]{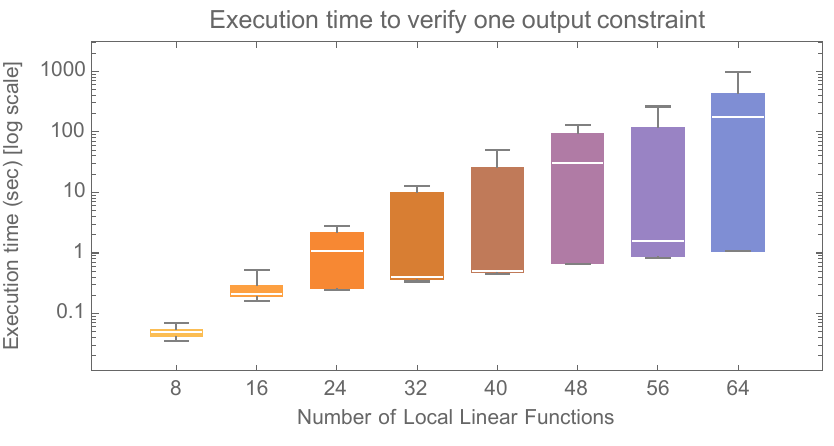}
		\label{fig:2-on}
	\end{subfigure}

	\begin{subfigure}[t]{0.49\textwidth}
		\hspace*{14pt}
		\raisebox{40pt}{\emph{(c)}~}\hspace{5pt}\includegraphics[width=.7\linewidth,trim={0px 0cm 0px 0.08cm},clip]{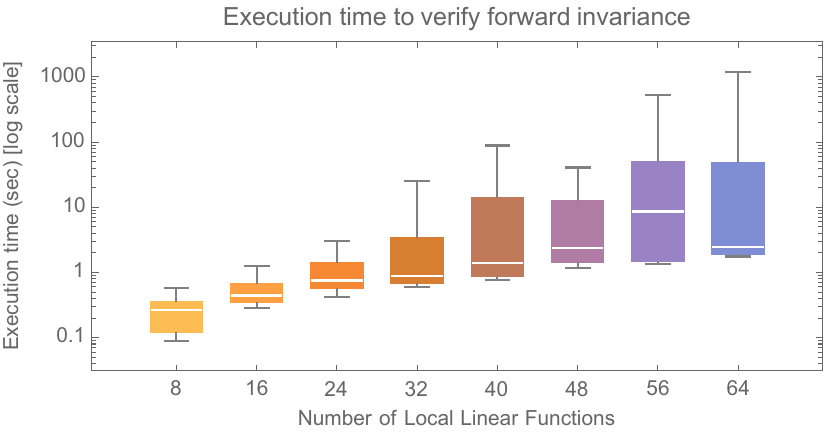}
	\label{fig:3a-off}
	\end{subfigure}\hspace*{-12pt}
	\begin{subfigure}[t]{0.49\textwidth}
		\hspace{25pt}\vspace{2pt}
		\raisebox{40pt}{\emph{(d)}~}\hspace{5pt}\includegraphics[width=.63\linewidth,trim={0px 0cm 0px 0.06cm},clip]{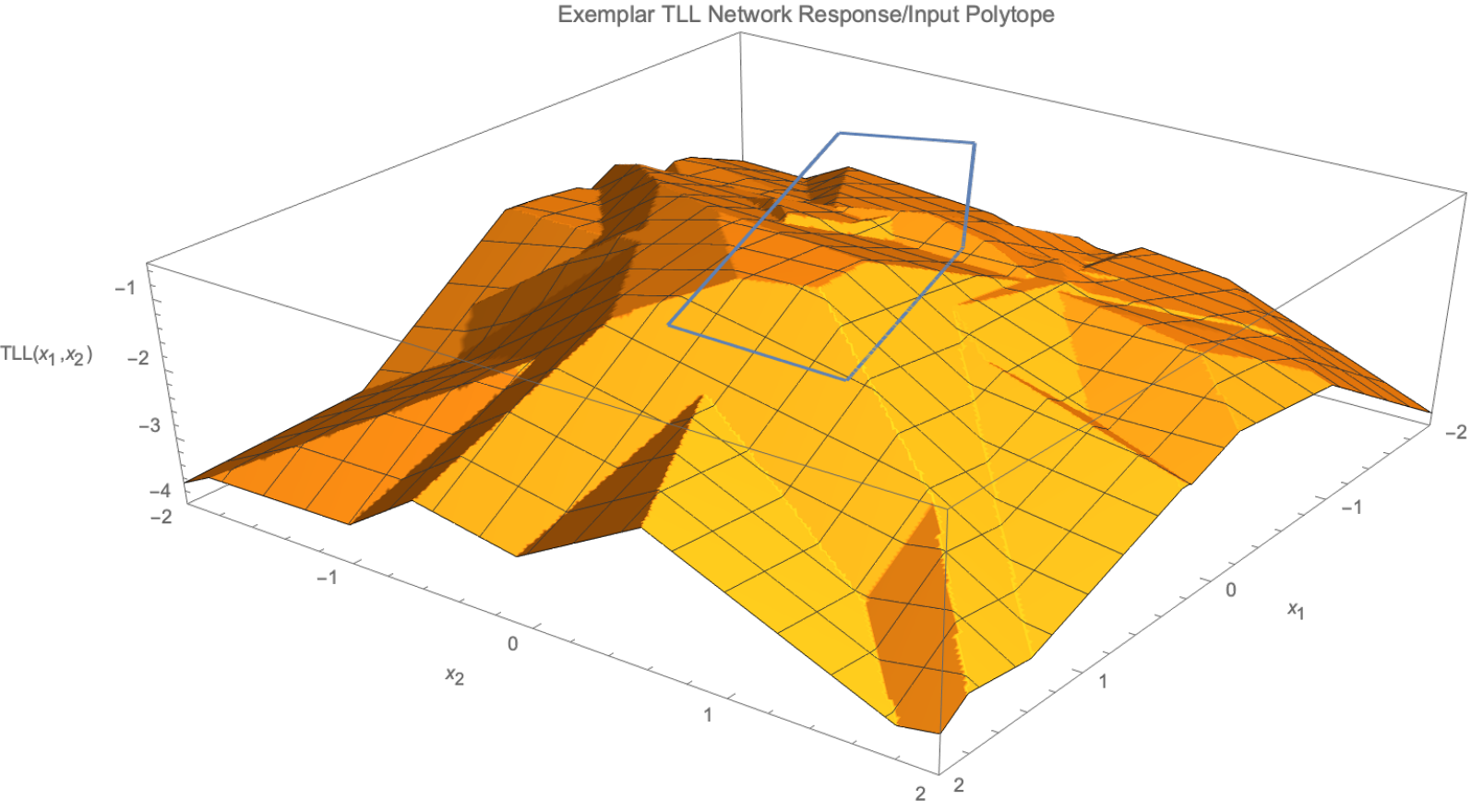}
		\label{fig:3a-on}
	\end{subfigure}
    %
\vspace{-4pt}
	\caption{\emph{(a)} Polynomial growth in number of regions and time needed to enumerate them (red curve is an exponential best-fit to execution time for reference); \emph{(b)} Box plot of execution times to verify a single random output constraint for each of several TLL sizes; \emph{(c)} Box plot of execution time to verify forward invariance of a polytopic set for each of several size TLLs; \emph{(d)} a TLL network and input constraint polytope used in output-constraint/forward-invariance experiments ($N=64$).}
	\label{fig:main_fig}
\vspace{-18pt}
\end{figure*}

To validate the claims we have made about the polynomial efficiency of the TLL 
verification problem, we implemented a version of the algorithm described in 
Section \ref{sec:polynomially_verifiable_nn_forms}. Then we conducted three 
separate experiments on a selection of randomly generated TLL networks of 
various sizes.

\begin{enumerate}[label={\itshape{\Alph*.}}]
	\item We used our tool to merely enumerate the regions in 
		$\mathcal{H\negthinspace S}(\tllparams{N}{M}^{(1)})$ (see Proposition 
		\ref{prop:linear_regions_tll}); this verifies that our implemented 
		hyperplane-region enumeration algorithm is in fact polynomial, and 
		confirms Theorem \ref{thm:arrangement_regions_bound}. 

	\item For each TLL network, we randomly generated a polytope in its 
		domain to serve as an input constraint for an instance of Problem 
		\ref{prob:generic_nn_verification}. We verified each such network/input 
		constraint with respect to a single, randomly generated output 
		constraint.

	\item Finally, we randomly generated an LTI system of the appropriate 
		dimension, and used our tool to check whether the same polytope 
		associated with each TLL network was forward invariant when said 
		network was used as a state feedback controller.
\end{enumerate}
These experiments were conducted on randomly generated TLL networks with $n=2$ 
and $m=1$ for sizes $N = 8, 16, 24, 32, 40, 48, 56, \text{ and } 64$, with 
$M=N$ for each network. We generated 20 instances of each size. A 3D plot of 
one such network is depicted in Figure \ref{fig:main_fig} \emph{(d)}.

We implemented a polynomial-time enumerator for the regions in a hyperplane 
arrangement that was further able to evaluate the verification LP on each such 
region. We used Python and a parallelism abstraction library, charm4py; all 
experiments were conducted on a system with a total of 24 Intel E5-2650 v4 
2.20GHz cores (48 virtual cores) of which our tool was allocated 24. The system 
had 256 GB of RAM.

\subsection{Region Enumeration} 
\label{sub:experiment_1_region_enumeration}
Figure \ref{fig:main_fig} \emph{(a)} shows the number of regions our tool found 
in $\mathcal{H\negthinspace S}(\tllparams{N}{M}^{(1)})$ for each TLL network, as 
well as the execution time required to enumerate them. For reference, the 
maximum number of regions possible is shown for each size, as determined by 
Theorem \ref{thm:arrangement_regions_bound}; note that $\mathcal{H\negthinspace 
S}(\tllparams{N}{M}^{(1)})$ has degeneracy in it, because its hyperplanes are 
differences between a common set of affine functions. Hence, 
$\mathcal{H\negthinspace S}(\tllparams{N}{M}^{(1)})$ has slightly fewer regions 
than the theoretical maximum that would be expected for a random arrangement.

Importantly, note the red curve, which is an exponential best-fit to the 
execution time of our tool. This shows that our region enumeration 
implementation is polynomial, and grows almost identically to the number of 
regions enumerated.

\subsection{Output Constraint Verification} 
\label{sub:output_constraint_verification}
For each example TLL network, we randomly generated an input constraint 
polytope and an output constraint to verify. Since $m=1$, an output constraint 
amounts to a random threshold, combined with a random choice of $\leq$ or 
$\geq$ to specify the constraint. A box-and-whisker plot of our tool's 
execution time on these verification problems is shown in Figure 
\ref{fig:main_fig} \emph{(b)}. The variability in execution time is due to the 
fact that our tool terminates early when a region of $\mathcal{H\negthinspace 
S}(\tllparams{N}{M}^{(1)})$ is found to generate a violation of the constraint.

\subsection{LTI System Forward Invariance Verification} 
\label{sub:lti_system_forward_invariance_verification}
For each of the TLL networks, we randomly generated LTI system matrices of the 
appropriate dimension for the TLL network to serve as a state-feedback 
controller. Then we used our tool to verify that each input polytope $P_x$ 
satisfies
\begin{equation}
	\forall x \in P_x ~.~ \left(A x + B \nn\subarg{\tllparams{N}{M}^{(1)})}(x)\right) \in P_x.
\end{equation}
That is $P_x$ is forward invariant for the system $(A,B)$ with closed-loop 
controller $\nn\subarg{\tllparams{N}{M}^{(1)})}$. This is easily accomplished 
since we consider only affine regions of $\tllparams{N}{M}^{(1)}$: the 
verification LP from before can be amended with a new objective function as 
follows, one for each of the $i = 1, \dots, \mathsf{N}_x$ constraints 
comprising $P_x$: $\max (\ell_{x,i}\circ(A + B\ell_0))(x)$.

The results of this experiment are shown in Figure \ref{fig:main_fig} 
\emph{(c)}. Note that the execution times are generally much better than in the 
previous experiment, despite the fact that twice as many ``output'' constraints 
are checked. This is ultimately because \emph{all} input polytopes were found 
not to be invariant, so the algorithm was almost always able to find a counter 
example early (as expected for randomized systems/controllers).









\bibliographystyle{plain} %
\bibliography{mybib}

\end{document}